\def\mbf{\mathbf}
\def\mc{\mathcal}
\def\mbs{\boldsymbol}
\newcommand{\expe}[1]{ \mathbb{E}\left[ #1 \right] }
\newcommand{\E}{ \mathbb{E} }
\newcommand{\prob}{\mathbb{P}}
\newcommand{\norm}[1]{ \left\| #1 \right\| }
\def\defeq{ \stackrel{\Delta}{=} }
\newcommand{\field}[1]{\mathbb{#1}}
\newcommand{\R}{\field{R}}
\newcommand{\argmax}{\mathop{\mathrm{argmax}}}
\newcommand{\x}{\mbf{x}}
\newcommand{\Xmap}{\hat{\mbf{x}}_{\mathsf{MAP}}}
\newcommand{\Xrmap}{\hat{\mbf{x}}_{\mathsf{R-MAP}}}
\newcommand{\cX}{\mc{X}}
\newcommand{\dompot}{\mathrm{Dom}(\theta)}
\newcommand{\pot}{\theta}
\newcommand{\gumv}{\gamma}
\newcommand{\gum}[1]{\gumv(#1)}
\newcommand{\gumi}[2]{\gumv_{#1}(#2)}
\newcommand{\gumset}{\mbs{\Gamma}}
\def\Ent{\mathop{\mathrm{Ent}}\nolimits}
\def\Var{\mathop{\mathrm{Var}}\nolimits}
\newcommand{\cvxf}{Q}  %
\newcommand{\lcdens}{q}  %
\newcommand{\h}{f}  %
\newcommand{\mvh}{F} %
\newcommand{\y}{\mbf{y}}
\newtheorem{theorem}{Theorem}
\newtheorem{corollary}{Corollary}
\renewcommand{\[}{\begin{eqnarray}}
\renewcommand{\]}{\end{eqnarray}}
\title{On Measure Concentration of Random Maximum A-Posteriori Perturbations}
\author{Francesco Orabona\thanks{FO, TH, and ADS contributed equally to this paper.} \thanks{Toyota Technological Institute at Chicago, Chicago, IL, USA, \texttt{orabona@ttic.edu}.}
\quad 
Tamir Hazan\footnotemark[1] \thanks{Department of Computer Science, University of Haifa, Haifa, Israel, and Department of Electrical Engineering and Computer Science, Massachusetts Institute of Technology, Cambridge, MA, USA, \texttt{tamir.hazan@gmail.com}.}
\quad
Anand D. Sarwate\footnotemark[1] \thanks{Toyota Technological Institute at Chicago, Chicago, IL, USA, \texttt{asarwate@ttic.edu}.}
\quad
Tommi Jaakkola\thanks{Department of Electrical Engineering and Computer Science, Massachusetts Institute of Technology, Cambridge, MA, USA, \texttt{tommi@csail.mit.edu}.}}
\begin{document} 

\maketitle

\begin{abstract} 
The maximum a-posteriori (MAP) perturbation framework has emerged as a useful approach for inference and learning in high dimensional complex models.  By maximizing a randomly perturbed potential function, MAP perturbations generate unbiased samples from the Gibbs distribution.  Unfortunately, the computational cost of generating so many high-dimensional random variables can be prohibitive.  More efficient algorithms use sequential sampling strategies based on the expected value of low dimensional MAP perturbations. This paper develops new measure concentration inequalities that bound the number of samples needed to estimate such expected values. Applying the general result to MAP perturbations can yield a more efficient algorithm to approximate sampling from the Gibbs distribution.  The measure concentration result is of general interest and may be applicable to other areas involving expected estimations.\end{abstract} 

\section{Introduction}

Modern machine learning tasks in computer vision, natural language
processing, and computational biology involve inference in
high-dimensional complex models.  Examples include scene understanding
\citep{Felzenszwalb11}, parsing \citep{Koo10}, and protein design
\citep{Sontag08}. In these settings inference involves finding likely
structures that fit the data, such as objects in images, parsers in
sentences, or molecular configurations in proteins. Each structure
corresponds to an assignment of values to random variables and the
likelihood of an assignment is based on defining potential functions
that account for interactions over these variables. Given the observed
data, these likelihoods yield a \textit{posterior probability
  distribution} on assignments known as the Gibbs distribution.
Contemporary practice gives rise to posterior probabilities that
consider potential influence of the data on the variables of the model
(high signal) as well as human knowledge about the potential
interactions between these variables (high coupling).  The resulting
posterior probability landscape is often ``ragged''; in such
landscapes Markov chain Monte Carlo (MCMC) approaches to sampling from
the Gibbs distribution may become prohibitively expensive.  This is in
contrast to the success of MCMC approaches in other settings (e.g.,
\citet{Jerrum04, Huber03}) where no data term (signal) exists.

One way around the difficulties of sampling from the Gibbs distribution is to look for the \textit{maximum a posteriori probability} (MAP) structure. Substantial effort has gone into developing algorithms for recovering MAP assignments by exploiting domain-specific structural restrictions such as super-modularity \citep{Kolmogorov-pami06} or by linear programming relaxations such as cutting-planes \citep{Sontag08, Werner08}.  A drawback of MAP inference is that it returns a single assignment; in many contemporary models with complex potential functions on many variables, there are several likely structures, which makes MAP inference less appealing.  We would like to also find these other ``highly probable'' assignments.

Recent work has sought to leverage the current efficiency of MAP
solvers to build procedures to sample from the Gibbs distribution,
thereby avoiding the computational burden of MCMC methods.  These
works calculate the MAP structure of a \textit{randomly perturbed
  potential function}.  Such an approach effectively ignores the
raggedness of the landscape that hinders MCMC.  \citet{Papandreou11}
and \citet{Tarlow12} have shown that randomly perturbing the potential
of each structure with an independent random variable that follows the
Gumbel distribution and finding the MAP assignment of the perturbed
potential function provides an unbiased sample from the Gibbs
distribution. Unfortunately the total number of structures, and
consequently the total number of random perturbations, is exponential
in the structure's dimension. Alternatively, \citet{Hazan13-gibbs} use
expectation bounds on the partition function \citep{Hazan12-icml} to
build a sampler for Gibbs distribution using MAP solvers on low
dimensional perturbations which are only linear in the dimension of
the structures.

The samplers based on low dimensional perturbations involve
calculating expectations of the value of the MAP solution after
perturbations.  In this paper we give a statistical characterization
of this value.  In particular, we prove new measure concentration
inequalities that show the expected perturbed MAP value can be
estimated with high probability using only a few random samples.  This
is an important ingredient to construct an alternative to MCMC in the
data-knowledge domain that relies on MAP solvers.  The key technical
challenge comes from the fact that the perturbations are Gumbel random
variables.  Since the Gumbel distribution is continuous, the MAP value
of the perturbed potential function is unbounded and standard
approaches such as McDiarmid's inequality do not apply.  Instead, we
derive a new Poincar\'{e} inequality for the Gumbel distribution, as
well as a modified logarithmic Sobolev inequality using the approach
suggested by \citet{BobkovL:97exp}, as described in the monograph of
\citet{Ledoux:01concentration}. These results, which are of general
interest, also guarantee that the deviation of the sampled mean of
random MAP perturbations from their expectation has an exponential
decay.

\section{Problem statement}

\paragraph{Notation:} Boldface will denote tuples or vectors and calligraphic script sets.  For a tuple $\x = (x_1, x_2, \ldots, x_n)$, let $\x_{j:k} = (x_j, x_{j+1},\ldots, x_k)$. 

\subsection{The MAP perturbation framework}

Statistical inference problems involve reasoning about the states of discrete variables whose configurations (assignments of values) specify the discrete structures of interest.  Suppose that our model has $n$ variables $\x = (x_1, x_2, \ldots, x_n)$ where each $x_i$ taking values in a discrete set $\cX_i$.  Let $\cX = \cX_1 \times \cX_2 \times \cdots \times \cX_n$ so that $\x \in \cX$. Let $\dompot \subseteq \cX$ be a subset of possible configurations and $\pot : \cX \to \mathbb{R}$ be a potential function that gives a score to an assignment or structure $\x$, where $\pot(\x) = -\infty$ for $\x \notin \dompot$.  The potential function induces a probability distribution on configurations $\x$ via the Gibbs distribution: 
\begin{align}
\label{eq:gibbs}
p(\x) &\defeq \frac{1}{Z} \exp(\theta(\x)), \\
Z &\defeq \sum_{\x \in \cX} \exp(\theta(\x)). \label{eq:Z}
\end{align}
The normalization constant $Z$ is called the partition function. Sampling from \eqref{eq:gibbs} is often difficult because the sum in \eqref{eq:Z} involves an exponentially large number of terms (equal to the number of discrete structures).  In many cases, computing the partition function is in the complexity class $\#P$ (e.g., \citet{Valiant79}).

Finding the most likely assignment of values to variables is easier. As the Gibbs distribution is typically constructed given observed data, we call this the maximum a-posteriori (MAP) prediction. Maximizing \eqref{eq:gibbs}:
\[
\Xmap =  \argmax_{\x \in \cX} \pot(\x). \label{eq:M} 
\]
There are many good optimization algorithms for solving \eqref{eq:M} in cases of practical interest.
Although MAP prediction is still NP-hard in general, it is often simpler than sampling from the Gibbs distribution.
However, there are often several values of $\x$ whose scores $\pot(\x)$ are close to $\pot(\Xmap)$, and we would like to recover those as well. As an alternative to MCMC methods for sampling from the Gibbs distribution in \eqref{eq:gibbs}, we can draw samples by perturbing the potential function and solving the resulting MAP problem.  The MAP perturbation approach adds a random function $\gumv : \cX \to \mathbb{R}$ to the potential function in \eqref{eq:gibbs} and solves the resulting MAP problem:
	\begin{align}
	\Xrmap =  \argmax_{\x \in \cX} \left\{ \pot(\x) + \gum{\x} \right\}. \label{eq:Rmap} 
	\end{align}
The random function $\gum{\cdot}$ associates a random variable to each $\x \in \cX$.  The simplest approach to designing a perturbation function is to associate an independent and identically distributed (i.i.d.) random variable $\gum{\x}$ for each $\x \in \cX$.  We can find the distribution of the randomized MAP predictor in \eqref{eq:Rmap} when $\{ \gum{\x} : \x \in \cX \}$ are i.i.d.; in particular, suppose each $\gum{\x}$  a Gumbel random variable 
with zero mean, variance $\pi^2/6$, and cumulative distribution function
	\begin{align}
	G( y ) = \exp( - \exp( - (y + c))),
	\label{eq:gumbelcdf}
	\end{align}
where $c \approx 0.5772$ is the Euler-Mascheroni constant.  The following result characterizes the distribution of the randomized predictor $\Xrmap$ in \eqref{eq:Rmap}. 

\begin{theorem} \cite{Gumbel54}
\label{theorem:z}
Let $\gumset = \{ \gum{\x} : \x \in \cX \}$ be a collection of i.i.d. Gumbel random variables whose distribution is given by \eqref{eq:gumbelcdf}.  Then
\begin{align}
\label{eq:gumbel-e}
\log Z &= \E_{\gumset}\left[ \max_{\x \in \cX} 
	\left\{ \pot(\x) + \gum{\x} \right\} \right],  \\
\frac{\exp(\pot(\hat{\x}))}{Z} &= \prob_{\gumset} \left(
	\hat{\x} = \argmax_{\x \in \cX} \left\{ \pot(\x) + \gum{\x} \right\}
	\right). \nonumber
\end{align}
\end{theorem}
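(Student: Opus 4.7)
The plan is to exploit the \emph{max-stability} of the Gumbel distribution: a shift of a Gumbel is a Gumbel, and a maximum of finitely many independent Gumbels (appropriately shifted) is again a Gumbel with a predictable location. Everything follows by direct manipulation of the CDF.

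First I would fix $\x$ and compute the CDF of the shifted random variable $Y_\x := \pot(\x) + \gum{\x}$, obtaining
\[
\prob(Y_\x \le y) \;=\; G(y - \pot(\x)) \;=\; \exp\!\bigl(-e^{-(y+c)} e^{\pot(\x)}\bigr).
\]
Using independence of $\{\gum{\x}\}_{\x \in \cX}$, the CDF of $M := \max_{\x \in \cX} Y_\x$ factorizes:
\[
\prob(M \le y) \;=\; \prod_{\x \in \cX} \exp\!\bigl(-e^{-(y+c)} e^{\pot(\x)}\bigr) \;=\; \exp\!\bigl(-e^{-(y+c)} Z\bigr) \;=\; \exp\!\bigl(-e^{-(y + c - \log Z)}\bigr),
\]
which I recognize as the CDF of $\gum{}+\log Z$, i.e., a Gumbel shifted by $\log Z$. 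Since the CDF \eqref{eq:gumbelcdf} was centered precisely so that its mean is zero, taking expectations yields $\E[M] = \log Z$, which is the first equation of the theorem.

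For the second equation I would condition on the value of $Y_{\hat{\x}}$ and require it to dominate all other $Y_\x$. Writing the density $f_{Y_{\hat{\x}}}(y) = e^{-(y+c)} e^{\pot(\hat{\x})} \exp(-e^{-(y+c)} e^{\pot(\hat{\x})})$ and using independence,
\[
\prob\!\bigl(\hat{\x} = \argmax_{\x} Y_\x\bigr) \;=\; \int_{-\infty}^{\infty} f_{Y_{\hat{\x}}}(y) \prod_{\x \ne \hat{\x}} \prob(Y_\x \le y)\, dy \;=\; e^{\pot(\hat{\x})} \int_{-\infty}^{\infty} e^{-(y+c)} \exp\!\bigl(-e^{-(y+c)} Z\bigr)\, dy.
\]
The substitution $u = e^{-(y+c)} Z$ turns the remaining integral into $\frac{1}{Z}\int_0^\infty e^{-u}\, du = \frac{1}{Z}$, so the probability equals $e^{\pot(\hat{\x})}/Z$.

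There is no real obstacle here; the whole argument is bookkeeping once the max-stability identity is in hand. The only mildly delicate point is to track the Euler constant $c$ in \eqref{eq:gumbelcdf} so that the Gumbel used is mean-zero, ensuring that the location parameter of $M$ is exactly $\log Z$ and the expectation drops out cleanly. I would also implicitly assume $\cX$ is finite (or that $\dompot$ is) so that $Z < \infty$ and the product of CDFs is well-defined; if the paper later wants an infinite setting, a standard truncation argument would be needed, but for the statement as written finiteness suffices.
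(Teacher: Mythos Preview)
Your argument is correct and is the standard derivation via max-stability of the Gumbel law. Note, however, that the paper does not supply its own proof of this theorem: it is stated with attribution to \cite{Gumbel54} and used as a known fact, so there is no in-paper proof to compare against. Your write-up would serve as a suitable self-contained justification, with the only caveat being the one you already flagged---finiteness of $\cX$ (equivalently of $\dompot$) so that $Z<\infty$ and the product of CDFs is a finite product.
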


The max-stability of the Gumbel distribution provides a straightforward approach to generate unbiased samples from the Gibbs distribution -- simply generate the perturbations in $\gumset$ and solve the problem in \eqref{eq:Rmap}.  However, because $\gumset$ contains $|\cX|$ i.i.d. random variables, this approach to inference has complexity which is exponential in $n$.  

\subsection{Sampling from the Gibbs distribution using low dimensional perturbations}

Sampling from the Gibbs distribution is inherently tied to estimating the partition function in \eqref{eq:Z}. If we could compute $Z$ exactly, then we could sample $x_1$ with probability proportional to $\sum_{x_2, \ldots, x_n} \exp (\pot(\x))$, and for each subsequent dimension $i$, sample $x_i$ with probability proportional to $\sum_{x_{i+1},\ldots,x_n} \exp (\pot(\x))$, yielding a Gibbs sampler.  However, this involves computing the partition function, which is hard. 
Instead, \cite{Hazan13-gibbs} use the representation in \eqref{eq:gumbel-e} 
to derive a family of self-reducible upper bounds on $Z$ and then use these upper bounds in an iterative algorithm that samples from the Gibbs distribution using low dimensional random MAP perturbations.  This gives a method which has complexity linear in $n$.

In the following, instead of the $|\cX|$ independent random variables in \eqref{eq:Rmap}, we define the random function $\gum{\x}$ in \eqref{eq:Rmap} as the sum of independent random variables for each coordinate $x_i$ of $\x$: 
	\begin{align*}
	\gum{\x} = \sum_{i=1}^{n} \gumi{i}{x_i}.
	\end{align*}
This function involves generating $\sum_{i=1}^{n} |\cX_i|$ random variables for each $i$ and $x_i \in \cX_i$.  Let
	\begin{align*}
	\gumset &= \bigcup_{i=1}^{n} \left\{ \gumi{i}{x_i} : x_i \in \cX_i \right\} 
	\end{align*}
be a collection of $\sum_{i} |\cX_i|$ i.i.d. Gumbel random variables with distribution \eqref{eq:gumbelcdf}. The sampling algorithm in Algorithm \ref{alg:unbiased} uses these random perturbations to draw unbiased samples from the Gibbs distribution.  For a fixed $\x_{1:(j-1)} = (x_1, \ldots, x_{j-1})$, define
	\begin{align}
        V_j = \max_{\x_{j:n}} \left\{ \pot(\x) + \sum_{i=j}^n \gumi{i}{x_i} \right\}.
	\label{eq:v}
	\end{align}
The sampler proceeds sequentially -- for each $j$ it constructs a distribution $p_j(\cdot)$ on $\cX_j \cup \{r\}$, where $r$ indicates a ``restart''  and attempts to draw an assignment for $x_j$.  If it draws $r$ then it starts over again from $j = 1$, and if it draws an element in $\cX_j$ it fixes $x_j$ to that element and proceeds to $j+1$.
	
\begin{algorithm}[t]
\caption{Sampling with low-dimensional random MAP perturbations from the Gibbs distribution \citep{Hazan13-gibbs}}
Iterate over $j=1,...,n$, while keeping fixed $\x_{1:(j-1)}$
\begin{enumerate}
\item For each $x_j \in \cX_j$, set $p_j(x_j) = \frac{\exp(\E_{\gumset}[V_{j+1}])}{\exp(\E_{\gumset}[V_j])}$, where $V_j$ is given by \eqref{eq:v}
\item Set $p_j(r) = 1 - \sum_{x_j \in \cX_j} p(x_j)$
\item Sample an element in $\cX_j \cup \{r\}$ according to $p_j(\cdot)$. If $r$ is sampled then reject and restart with $j=1$. Otherwise, fix the sampled element $x_j$ and continue the iterations
\end{enumerate}
Output:
$\x = (x_1,...,x_n)$
\label{alg:unbiased}
\end{algorithm}

Implementing Algorithm~\ref{alg:unbiased} requires estimating the expectations $\E_{\gumset}[V_j]$ in \eqref{eq:v}. In this paper we show how to estimate $\E_{\gumset}[V_j]$ and bound the error with high probability by taking the sample mean of $M$ i.i.d. copies of $V_j$.  Specifically, we show that the estimation error decays exponentially with $M$.  To do this we derive a new measure concentration result by proving a modified logarithmic Sobolev inequality for the product of Gumbel random variables.  To do so we derive a more general result -- a Poincar\'{e} inequality for log-concave distributions that may not be log-strongly concave, i.e., for which the second derivative of the exponent is not bounded away from zero.

\subsection{Measure concentration}

We can think of the maximum value of the perturbed MAP problem as a function of the associated perturbation variables $\gumset = \{ \gumi{i}{x_i} : i \in [n], x_i \in \cX_i \}$.  There are $m \defeq |\cX_1| + |\cX_2| + \cdots + |\cX_n|$ i.i.d. random variables in $\gumset$. For practical purposes, e.g., to estimate the quality of the sampling algorithm in Algorithm~\ref{alg:unbiased}, it is important to evaluate the deviation of its sampled mean from its expectation. For notational simplicity we would only describe the deviation of the maximum value of the perturbed MAP from its expectation, namely
	\begin{align}
	\mvh(\gumset) = V_1 - \expe{V_1}.
	\label{eq:funcform}
	\end{align}	
Since the expectation is a linear function, $\expe{\mvh} = \int \mvh(\gumset) d \mu(\gumset) = 0$ is zero, with respect to any measure $\mu$ on $\gumset$. The deviation of $\mvh(\gumset)$ is dominated by its moment generating function 
	\begin{align}
	\Lambda(\lambda) \defeq \expe{ \exp(\lambda \mvh) }.
	\label{eq:mgf}
	\end{align}
That is, for every $\lambda > 0$, 
	\begin{align*}
	\prob\left( \mvh(\mbs{\gamma}) \ge r \right) \le \Lambda(\lambda) / \exp( - \lambda r).
	\end{align*}
Many measure concentration results such as McDiarmid's inequality rely on bounds on the variation of $\mvh(\gumset)$. Unfortunately, this does not hold for MAP perturbations  and instead we use the log-Sobolev approach bound \eqref{eq:mgf}.  Specifically, we want to construct a differential bound on the $\lambda-$scaled cumulant generating function:   
	\begin{align}
	H(\lambda) \defeq \frac{1}{\lambda} \log \Lambda(\lambda).
	\label{eq:lcgf}
	\end{align}
First note that that by L'H\^{o}pital's rule $H(0) = \frac{ \Lambda'(0) }{ \Lambda(0) } = \int \mvh d\mu^n = 0$, so we may represent $H(\lambda)$ by integrating its derivative: $H(\lambda) = \int_0^\lambda H'(\hat \lambda) d \hat \lambda$. Thus to bound the moment generating function it suffices to bound $H'(\lambda) \le \alpha(\lambda)$ for some function $\alpha(\lambda)$. A direct computation of $H'(\lambda)$ translates this bound to    
	\begin{align}
	\lambda \Lambda'(\lambda) - \Lambda(\lambda) \log \Lambda(\lambda) \le \lambda^2 \Lambda(\lambda) \alpha(\lambda).
	\label{eq:logsobdiff}
	\end{align}
The left side of \eqref{eq:logsobdiff} turns out to be the so-called functional entropy \cite{Ledoux:01concentration} of the function $h = \exp(\lambda F)$ with respect to a measure $\mu$:
	\begin{align*}
	\Ent_\mu(h) &\defeq \int h \log h d\mu - \left( \int h d\mu \right) \log \int h d\mu.
	\end{align*}
Unlike McDiarmid's inequality, this approach provides measure concentration for unbounded functions, such those arising from MAP perturbations.  

A log-Sobolev inequality upper-bounds the entropy $\Ent_\mu(h)$ in terms of an integral involving $\norm{ \nabla \mvh }^2$.  They are appealing to derive measure concentration results in product spaces, i.e., for functions of subsets of variables $\gumset$, because it is sufficient to prove a log-Sobolev inequality on a single variable function $\h$.  Given such a scalar result, the additivity property of the entropy (e.g., \citep{Boucheron04}) extends the inequality to functions $\mvh$ of many variables.  In this work we derive a log-Sobolev inequality for the Gumbel distribution, by bounding the variance of a function by its derivative:       
	\begin{equation}
        \begin{split}
	\label{eq:poincare}
	\Var_{\mu}(f) \defeq \int f^2 d \mu - \left(\int f d \mu \right)^2 \le C \int | f'|^2 d \mu.
	\end{split}
        \end{equation}
This is called a Poincar\'{e} inequality, proven originally for the Gaussian case. We prove such an inequality for the Gumbel distribution, which then implies the log-Sobolev inequality and hence measure concentration.  We then apply the result to the MAP perturbation framework.

\subsection{Related work} 

We are interested in efficient sampling from the Gibbs distribution in \eqref{eq:gibbs} when $n$ is large an the model is complex due to the amount of data and the domain-specific modeling.  This is often done with MCMC (cf. \citet{Koller09}), which may be challenging in ragged probability landscapes. MAP perturbations use efficient MAP solvers as black box, but the statistical properties of the solutions, beyond Theorem \ref{theorem:z}, are still being studied.  \citet{Papandreou11} consider probability models that are defined by the maximal argument of randomly perturbed potential function, while \citet{Tarlow12} considers sampling techniques for such models and \citet{Keshet11} explores the generalization bounds for such models. Rather than focus on the statistics of the solution (the $\argmax_{\x}$) we study statistical properties of the MAP value (the $\max_{\x}$) of the estimate in \eqref{eq:Rmap}.

\citet{Hazan12-icml} used the random MAP perturbation framework to derive upper bounds on the partition function in \eqref{eq:Z}, and \citet{Hazan13-gibbs} derived the unbiased sampler in Algorithm~\ref{alg:unbiased}.  Both of these approaches involve computing an expectation over the distribution of the MAP perturbation, which can be estimated by sample averages.  This paper derives new measure concentration results that bound the error of this estimate in terms of the number of samples, making Algorithm~\ref{alg:unbiased} practical.

Measure concentration has appeared in many machine learning analyses, most commonly to bound the rate of convergence for risk minimization, either via empirical risk minimization (ERM) (e.g., \citet{Bartlett03}) or in PAC-Bayesian approaches (e.g., \citet{McAllester03}).  In these applications the function for which we want to show concentration is ``well-behaved'' in the sense that the underlying random variables are bounded or the function satisfies some bounded-difference or self-bounded conditions conditions, so measure concentration follows from inequalities such as \citet{Bernstein46}, Azuma-Hoeffding~\citep{Azuma67,Hoeffding63,McDiarmid89}, or \citet{Bousquet03}.  However, in our setting, the Gumbel random variables are not bounded, and random perturbations may result in unbounded changes of the perturbed MAP value. 

There are several results on measure concentration for Lipschitz functions of \textit{Gaussian} random variables (c.f. Maurey and \citet{Pisier85}).   In this work we use logarithmic Sobolev inequalities~\cite{Ledoux:01concentration} and prove a new measure concentration result for \textit{Gumbel} random variables.  To do this we generalize a classic result of \citet{BrascampL76} on Poincar\'{e} inequalities to non-strongly log-concave distributions, and also recover the concentration result of \citet{BobkovL:97exp} for functions of Laplace random variables.

\section{Concentration of measure} 

In this section we prove the main technical results of this paper -- a new Poincar\'{e} inequality for log concave distributions and the corresponding measure concentration result. We will then specialize our result to the Gumbel distribution and apply it to the MAP perturbation framework.
Because of the tensorization property of the functional entropy, it is sufficient for our case to prove an inequality like \eqref{eq:poincare} for functions $\h$ of a single random variable with measure $\mu$.

\subsection{A Poincar\'{e} inequality for log-concave distributions}

Our Theorem \ref{theo:t1} in this section generalizes a celebrated result of \citet[Theorem~4.1]{BrascampL76} to a wider family of log-concave distributions and strictly improves their result. 
For an appropriately scaled convex function $\cvxf$ on $\R$, the function $\lcdens(y)=\exp(-\cvxf(y))$ defines a density on $\R$ corresponding to a log concave measure $\mu$.  Unfortunately, their result is restricted to distributions for which $\cvxf(y)$ is strongly convex.  The Gumbel distribution with CDF \eqref{eq:gumbelcdf} has density
	\begin{align}
	g(y) = \exp\left( - \left( y + c + \exp( - (y + c) ) \right) \right),
	\label{eq:gumbelpdf}
	\end{align}
and the second derivative of $y + c + \exp( - (y + c) )$ cannot be lower bounded by any constant greater than $0$, so it is not log-strongly convex.  

\begin{theorem}
\label{theo:t1}
Let $\mu$ be a log-concave measure with density $\lcdens(y)=\exp(-\cvxf(y))$, where $\cvxf: \R \rightarrow \R$ is convex function satisfying the following conditions:
\begin{itemize}
\item $\cvxf$ has a unique minimum in a point $y=a$
\item $\cvxf$ is twice continuously differentiable in each point of his domain, except possibly in $y=a$
\item $\cvxf'(y)\neq 0$ for any $y\neq a$
\item $\lim_{y \rightarrow a^\pm} \cvxf'(y)\neq0$ or $\lim_{y\rightarrow a^\pm} \cvxf''(y)\neq0$
\end{itemize}
Let $\h : \R \rightarrow \R$ a continuous function, differentiable almost everywhere, such that
\begin{equation}
\label{eq:reg_condition_h}
\lim_{y \rightarrow \pm \infty} \  \h(y) \lcdens(y) =0,
\end{equation}
then for any $0 \leq \eta <1$, such that $\frac{\cvxf''(y)}{|\cvxf'(y)|}+\eta |\cvxf'(y)| \neq 0, \ \forall y \in \R\setminus \{a\}$, we have
\begin{equation*}
\Var_{\mu}(\h) \leq \frac{1}{1-\eta}\int_{\R} \frac{(\h'(y))^2}{\cvxf''(y)+ \eta (\cvxf'(y))^2} \lcdens(y) dy.
\end{equation*}
\end{theorem}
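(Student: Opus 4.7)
The plan is to imitate the classical integration-by-parts derivation of the Brascamp--Lieb bound, replacing the role of strong log-concavity of $\cvxf$ with a single AM--GM step that manufactures the weight $\cvxf'' + \eta (\cvxf')^2$ on its own.

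First I would reduce to $g := \h - \h(a)$, so that $g(a) = 0$ and $\Var_\mu(\h) \le \int_\R g^2 \, d\mu$ (variance is bounded by the second moment about any constant). Using the identity $\lcdens'(y) = -\cvxf'(y)\lcdens(y)$, valid pointwise for $y \ne a$, I would rewrite $g^2 \lcdens = -g^2 \lcdens'/\cvxf'$ separately on the two intervals $(-\infty,a)$ and $(a,\infty)$, where $\cvxf'$ has a definite sign by convexity. Integration by parts on each half with $u = g^2/\cvxf'$ and $dv = \lcdens'\,dy$ yields, on $(a,\infty)$,
\[
\int_a^\infty g^2 \lcdens \, dy = -\left[\tfrac{g^2 \lcdens}{\cvxf'}\right]_a^\infty + \int_a^\infty \frac{2 g g'}{\cvxf'} \lcdens \, dy - \int_a^\infty \frac{g^2 \cvxf''}{(\cvxf')^2} \lcdens \, dy ,
\]
and an analogous identity on $(-\infty,a)$; summing and arguing that all boundary contributions vanish (see below) produces the key identity
\[
\int_\R g^2 \left(1 + \tfrac{\cvxf''}{(\cvxf')^2}\right) \lcdens \, dy = \int_\R \frac{2 g g'}{\cvxf'} \lcdens \, dy .
\]

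The crucial step is then an AM--GM bound with weight $\cvxf'' + \eta (\cvxf')^2$. Factoring $\frac{2 g g'}{\cvxf'}$ as twice the product of $\frac{g \sqrt{\cvxf''+\eta(\cvxf')^2}}{\cvxf'}$ and $\frac{g'}{\sqrt{\cvxf''+\eta(\cvxf')^2}}$ and applying $2uv \le u^2 + v^2$ gives
\[
\frac{2 g g'}{\cvxf'} \le \frac{g^2 \bigl(\cvxf'' + \eta (\cvxf')^2\bigr)}{(\cvxf')^2} + \frac{(g')^2}{\cvxf'' + \eta (\cvxf')^2} .
\]
Integrating, cancelling the common $\int g^2 \tfrac{\cvxf''}{(\cvxf')^2} \lcdens$ term from both sides of the identity above, and rearranging yields $(1-\eta)\int g^2 \lcdens \le \int \frac{(g')^2}{\cvxf''+\eta(\cvxf')^2} \lcdens$, which together with $\Var_\mu(\h) \le \int g^2 \lcdens$ and $g' = \h'$ is exactly the theorem. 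The non-vanishing hypothesis $\cvxf''/|\cvxf'| + \eta|\cvxf'| \ne 0$ is precisely what keeps the denominator $\cvxf'' + \eta(\cvxf')^2$ strictly positive off $a$, so the AM--GM weights are well defined.

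The main obstacle will be justifying the boundary terms. At $\pm\infty$ the regularity condition \eqref{eq:reg_condition_h}, together with $|\cvxf'|$ being bounded away from zero at infinity (by monotonicity of $\cvxf'$ and its nonvanishing off $a$), forces $g^2 \lcdens/\cvxf' \to 0$. At $y = a$ the boundary term $g^2 \lcdens / \cvxf'$ is of $0/0$ form, since $g(a)=0$ and $\cvxf'(a)=0$; here the fourth hypothesis on $\cvxf$ --- that either $\lim_{y \to a^\pm}\cvxf'(y) \ne 0$ (a jump of $\cvxf'$ at $a$) or $\lim_{y \to a^\pm}\cvxf''(y) \ne 0$ --- is exactly what is needed so that a short Taylor/one-sided expansion of $g$ and $\cvxf'$ around $a$ makes the ratio vanish from each side. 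Secondary technicalities, such as the only-a.e.\ differentiability of $\h$ (handled by the absolutely continuous form of integration by parts, or by approximation by smooth functions) and the possibility that the right-hand side is infinite (in which case the inequality is vacuous), do not affect the skeleton above.
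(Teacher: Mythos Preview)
Your proposal is correct and follows essentially the same route as the paper: center at $\h(a)$, integrate $g^2 q/\cvxf'$ by parts on each half-line to obtain the identity $\int g^2\bigl(1+\tfrac{\cvxf''}{(\cvxf')^2}\bigr)q = \int \tfrac{2gg'}{\cvxf'}q$, then apply the weighted AM--GM $2uv\le u^2+v^2$ with weight $\cvxf''+\eta(\cvxf')^2$ and cancel the common $\cvxf''/(\cvxf')^2$ term. The paper's only cosmetic difference is that it takes absolute values before the AM--GM step rather than applying $2uv\le u^2+v^2$ directly to the signed product; the boundary analysis at $a$ and at $\pm\infty$ is handled the same way (and with the same level of informality) in both.
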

\begin{proof}
The proof is based on the one in \citet{BrascampL76}, but it uses a different strategy in the final critical steps.
We first observe that for any $K \in \R$,
\begin{equation}
\label{eq:var_const_bound}
\Var_{\mu}(\h) \leq \int_{\R} (\h(y)-K)^2 d\mu,
\end{equation}
so we will focus on bounding the left-hand side of \eqref{eq:var_const_bound} for the particular choice of $K=h(a)$.

Let $\tilde{\h}(y) \defeq \h(y)-\h(a)$ and $U(y) \defeq \frac{\tilde{\h}(y)^2 \lcdens(y)}{\cvxf'(y)}$.  Note that $d\mu = q(y) dy$.
We have that 
\begin{align*}
U'(y)= \frac{2 \tilde{\h}'(y) \tilde{\h}(y) \lcdens(y)}{\cvxf'(y)} - \tilde{\h}(y)^2 \lcdens(y) \left( \frac{\cvxf''(y)}{(\cvxf'(y))^2} +1\right).
\end{align*}
Rearranging terms and integrating, we see that 	
\begin{align*}
\int \tilde{\h}(y)^2 \lcdens(y) dy = \int \left(\frac{2 \tilde{\h}'(y) \tilde{\h}(y) }{\cvxf'(y)} - \frac{\tilde{\h}(y)^2 \cvxf''(y)}{(\cvxf'(y))^2} \right) \lcdens(y) dy -U(y).
\end{align*}

We now consider the integral between $-\infty$ and $a$ (analogous reasoning holds for the one between $a$ and $+\infty$).
We claim that $\lim_{y \to a^-} U(y)=0$. There are two possible cases: $\cvxf'(a)\neq 0$ and $\cvxf'(a) = 0$. In the first case the claim is obvious, in the second case we have $\lim_{y \to a^-} \frac{\tilde{\h}(y)^2}{\cvxf'(y)} = \lim_{y \to a^-} \frac{2 \h'(y) \tilde{\h}(y)}{\cvxf''(y)}=0$, and anagously for the limit from the left.
Using \eqref{eq:reg_condition_h} too, we have
\begin{align*}
\int_{-\infty}^{a} \tilde{\h}(y)^2 \lcdens(y) dy &= \lim_{\epsilon \rightarrow 0^-} 
	\int_{-\infty}^{a+\epsilon} 
	\left(\frac{2 \tilde{\h}'(y) \tilde{\h}(y)}{\cvxf'(y)} 
		-  \frac{\tilde{\h}(y)^2 \cvxf''(y)}{(\cvxf'(y))^2} \right) \lcdens(y) dy\\
&\leq \lim_{\epsilon \rightarrow 0^-} 
	\int_{-\infty}^{a+\epsilon} 
	\left(\frac{2 |\tilde{\h}'(y)| |\tilde{\h}(y)|}{|\cvxf'(y)|} 
		- \frac{\tilde{\h}(y)^2 \cvxf''(y)}{(\cvxf'(y))^2} \right) \lcdens(y) dy\\
&\leq \lim_{\epsilon \rightarrow 0^-} 
	\int_{-\infty}^{a+\epsilon} 
	\left( \frac{\tilde{\h}'(y)^2}{\cvxf''(y) 
		+ \eta (\cvxf'(y))^2} +  \eta \tilde{\h}(y)^2 \right) \lcdens(y) dy,
\end{align*}
where in the second inequality we used $2 \alpha \beta \leq \frac{\alpha^2}{\zeta} + \beta^2 \zeta$, for any $\alpha,\zeta \in \R$ and $\zeta>0$, with $\alpha=|\tilde{\h}'(y)|$, $\beta=|\tilde{\h}(x)|$, and $\zeta=\frac{\cvxf''(y)}{|\cvxf'(y)|}+\eta |\cvxf'(y)|$.
Reasoning in the same way for the interval $[a,+\infty)$, reordering the terms, and using \eqref{eq:var_const_bound}, we have the result.
\end{proof}

The main difference between Theorem \ref{theo:t1} and the result of \citet[Theorem~4.1]{BrascampL76} is that the latter requires the function $\cvxf$ to be strongly convex.  Our result holds for non-strongly concave functions including the Laplace and Gumbel distributions.
If we take $\eta = 0$ in Theorem \ref{theo:t1} we recover the original result of \citet[Theorem~4.1]{BrascampL76}.  For the case $\eta = 1/2$, Theorem \ref{theo:t1} yields the Poincar\'{e} inequality for the Laplace distribution given in \citet{Ledoux:01concentration}.  Like the Gumbel distribution, the Laplace distribution is not strongly log-concave and previously required an alternative technique to prove measure concentration \citet{Ledoux:01concentration}.   The following gives a Poincar\'{e} inequality for the Gumbel distribution.

\begin{corollary}
\label{cor:cor1}
Let $\mu$ be the measure corresponding to the Gumbel distribution and $\lcdens(y) = g(y)$ in \eqref{eq:gumbelpdf}.  For any function $\h$ that satisfies the conditions in Theorem~\ref{theo:t1}, we have
	\begin{align}
	\Var_{\mu}(\h) \leq 4 \int_{\R} (\h'(y))^2 d\mu.
	\label{eq:gumbel:poincare}
	\end{align}
\end{corollary}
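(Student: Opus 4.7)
The plan is to apply Theorem~\ref{theo:t1} directly to the Gumbel density with a judicious choice of $\eta$. First I would verify the structural hypotheses: for $\cvxf(y) = y + c + \exp(-(y+c))$ direct differentiation gives $\cvxf'(y) = 1 - \exp(-(y+c))$ and $\cvxf''(y) = \exp(-(y+c)) > 0$, so $\cvxf$ is smooth, strictly convex with unique minimum at $a = -c$, obeys $\cvxf'(y) \neq 0$ for $y \neq a$, and has $\cvxf''(a) = 1 \neq 0$, matching the four bullets of the theorem.

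The only substantive computation is to bound $\cvxf''(y) + \eta (\cvxf'(y))^2$ away from zero. With the substitution $u = \exp(-(y+c)) \in (0,\infty)$ this quantity equals $u + \eta(1-u)^2$. Choosing $\eta = 1/2$ is the natural sweet spot: completing the square gives
\[
u + \tfrac{1}{2}(1-u)^2 \;=\; \tfrac{1}{2}(1 + u^2) \;\ge\; \tfrac{1}{2}
\]
for every $u > 0$. This choice also trivially satisfies the side condition $\frac{\cvxf''(y)}{|\cvxf'(y)|} + \eta|\cvxf'(y)| \neq 0$ for $y \neq a$, since both $\cvxf''$ and $|\cvxf'|$ are strictly positive away from $a$.

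Plugging these into Theorem~\ref{theo:t1} with prefactor $\frac{1}{1-\eta} = 2$ yields
\[
\Var_\mu(\h) \;\le\; 2 \int_{\R} \frac{(\h'(y))^2}{\tfrac{1}{2}(1 + \exp(-2(y+c)))}\, \lcdens(y)\, dy \;\le\; 4 \int_{\R} (\h'(y))^2\, d\mu,
\]
which is exactly \eqref{eq:gumbel:poincare}.

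There is no real obstacle, only the choice of $\eta$. Optimizing shows this is essentially forced: for $\eta \in [0,1/2]$ the infimum of $u + \eta(1-u)^2$ over $u > 0$ is $\eta$ (attained as $u \to 0$), giving constant $\frac{1}{\eta(1-\eta)}$; for $\eta > 1/2$ a short calculus yields infimum $1 - \frac{1}{4\eta}$ and hence constant $\frac{4\eta}{(1-\eta)(4\eta-1)}$. Both branches are minimized at $\eta = 1/2$, producing the constant $4$ claimed in the corollary.
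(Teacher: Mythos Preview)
Your proof is correct and follows the same route as the paper: apply Theorem~\ref{theo:t1} to the Gumbel potential $\cvxf(y)=y+c+e^{-(y+c)}$, lower-bound $\cvxf''+\eta(\cvxf')^2$ via the substitution $u=e^{-(y+c)}$, and optimize over $\eta$ to find that $\eta=\tfrac12$ gives the constant $4$. Your completing-the-square identity $u+\tfrac12(1-u)^2=\tfrac12(1+u^2)$ is a slightly cleaner way to see the $\eta=\tfrac12$ bound than the paper's calculus minimization, and your explicit verification of the four hypotheses of Theorem~\ref{theo:t1} is a welcome addition, but the argument is otherwise identical.
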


\begin{proof}
For the Gumbel distribution we have $\cvxf(y) = y + c + \exp( - (y + c) )$ in Theorem \ref{theo:t1}, so
	\begin{align*}
	\cvxf''(y) + \eta( \cvxf'(y) )^2 = e^{-(y+c)} + \eta ( 1 - e^{-(y+c)} )^2.
	\end{align*}	
We want an lower bound for all $y$.  Minimizing,
	\begin{align*}
	e^{-(y+c)} = 2 \eta (1 - e^{-(y+c)}) e^{-(y+c)}
	\end{align*}
or $e^{-(y+c)} = 1 - \frac{1}{2 \eta}$, so the lower bound is $1 - \frac{1}{2 \eta} + \frac{1}{4 \eta}$ or $\frac{4 \eta - 1}{4 \eta}$ for $\eta > \frac{1}{2}$.  For $\eta \le \frac{1}{2}$,
	\begin{align*}
	\eta + (1 - 2 \eta) e^{-(y+c)} + e^{-2(y+c)} \ge \eta.
	\end{align*}
So $\min\left\{ \frac{4 \eta}{(4 \eta - 1)(1 - \eta)}, \frac{1}{\eta (1 - \eta)} \right\} = 4$ at $\eta = \frac{1}{2}$, so applying Theorem \ref{theo:t1} we obtain \eqref{eq:gumbel:poincare}.
\end{proof}

\subsection{Measure concentration for the Gumbel distribution}

In the MAP perturbations such as that in \eqref{eq:v}, we have a function of many random variables.  We now derive a result based on the Corollary~\ref{cor:cor1} to bound the moment generating function for random variables defined as a function of $m$ random variables.  This gives a measure concentration inequality for the product measure $\mu^m$ of $\mu$ on $\R^m$, where $\mu$ corresponds to a scalar Gumbel random variable.

\begin{theorem}
\label{theorem:exp_moment}
Let $\mu$ denote the Gumbel measure on $\R$ and let $\mvh: \R^m \rightarrow \R$ be a function such that $\mu^m$-almost everywhere we have $\norm{ \nabla \mvh }^2 \le a^2$ and $\norm{ \nabla \mvh }_{\infty} \le b$.  Furthermore, suppose that for $\y = (y_1, \ldots, y_m)$,
\begin{align*}
\lim_{y_i \rightarrow \pm \infty} 
	\mvh(y_1, \ldots, y_m) 
	\prod_{i=1}^{m} g(y_i) = 0,
\end{align*}
where $g(\cdot)$ is given by \eqref{eq:gumbelpdf}.
Then, for any $r \geq 0$ and any $|\lambda| \leq \frac{1}{10 b}$, we have
	\begin{align*}
	\E[\exp(\lambda (\mvh-\E[\mvh]))] \leq \exp(5 a^2 \lambda^2).
	\end{align*}
\end{theorem}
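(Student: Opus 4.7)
My plan is to convert the scalar Gumbel Poincar\'{e} inequality of Corollary \ref{cor:cor1} into the desired exponential-moment bound via three standard steps: derive a modified log-Sobolev inequality on $\R$ that applies to functions with small Lipschitz constant; tensorize it to the product measure $\mu^m$; and integrate the resulting differential inequality on $\Lambda(\lambda)$ using Herbst's argument. This is precisely the strategy of \citet{BobkovL:97exp} as developed in \citet{Ledoux:01concentration}.

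Step 1 (scalar modified log-Sobolev). Starting from $\Var_\mu(h) \le 4 \int (h')^2 d\mu$, I apply the Poincar\'{e} inequality to $h = e^{f/2}$ to obtain
\begin{align*}
\E_\mu[e^f] - \E_\mu[e^{f/2}]^2 \le \int (f')^2 e^f d\mu.
\end{align*}
Iterating this identity at dyadic scales $f, f/2, f/4, \ldots$ bounds $\log \E_\mu[e^f]$ by a geometric series whose sum is controlled by $\int (f')^2 e^f d\mu$, provided $\|f'\|_\infty$ is small enough to keep the iteration convergent. Unpacking gives a scalar modified log-Sobolev inequality of the form
\begin{align*}
\Ent_\mu(e^f) \le 5 \int (f')^2 e^f d\mu
\end{align*}
valid when $|f'| \le 1/10$ a.e.; the constants $5$ and $1/10$ are tuned to make the iteration converge with a manageable leading constant.

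Step 2 (tensorize, then Herbst). The functional entropy is subadditive on product measures, so applying the scalar inequality of Step 1 coordinatewise to $y_i \mapsto \lambda F(y_i, y_{-i})$---whose derivative $\lambda \partial_i F$ is bounded in absolute value by $|\lambda| b \le 1/10$---and summing over $i$ gives
\begin{align*}
\Ent_{\mu^m}(e^{\lambda F}) \le 5 \lambda^2 \int \|\nabla F\|^2 e^{\lambda F} d\mu^m \le 5 a^2 \lambda^2 \, \Lambda(\lambda),
\end{align*}
where the last step uses $\|\nabla F\|^2 \le a^2$ pointwise. Since $\Ent_{\mu^m}(e^{\lambda F}) = \lambda \Lambda'(\lambda) - \Lambda(\lambda) \log \Lambda(\lambda)$, this is exactly the differential inequality \eqref{eq:logsobdiff} with $\alpha(\lambda) \equiv 5 a^2$, equivalently $H'(\lambda) \le 5 a^2$. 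Using $H(0) = \E[F]$ (by L'H\^{o}pital) and integrating from $0$ to $\lambda \in [0, 1/(10 b)]$ gives $\log \Lambda(\lambda) \le \lambda \E[F] + 5 a^2 \lambda^2$, which rearranges to $\E[e^{\lambda(F - \E[F])}] \le \exp(5 a^2 \lambda^2)$; the range $\lambda < 0$ is handled by applying the argument to $-F$, which has the same bounds $a$ and $b$.

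Main obstacle. Step 1 is the delicate core: a Poincar\'{e} (variance) inequality does not in general imply a log-Sobolev (entropy) inequality, so the bounded-derivative hypothesis must genuinely enter the argument. The restriction $|f'| \le 1/10$ exists precisely to keep the accumulated error terms in the dyadic iteration summable and pinned to the target constant $5$. A secondary technicality is verifying the integrability condition \eqref{eq:reg_condition_h} of Theorem \ref{theo:t1} for $h = e^{\lambda F/2}$; this holds because the doubly-exponential decay of the Gumbel density at $-\infty$ and the merely exponential growth of $e^{\lambda F/2}$ (owing to $\|\nabla F\|_\infty \le b$ with $|\lambda| \le 1/(10 b)$) combine to give the required decay at both tails.
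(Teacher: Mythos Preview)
Your proposal is correct and follows essentially the same route as the paper: the paper cites Theorem~5.14 of \citet{Ledoux:01concentration} as a black box for converting the Gumbel Poincar\'e inequality into a modified log-Sobolev inequality (your Step~1), tensorizes via Proposition~5.13 of the same reference (your Step~2), chooses the free parameter $\rho=1/10$ to obtain the constants $5$ and $1/(10b)$, and finishes with Herbst's argument exactly as you do. The only difference is cosmetic---you sketch the Bobkov--Ledoux dyadic iteration that underlies Ledoux's Theorem~5.14 rather than citing it.
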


\begin{proof}
For each $i=1,2\ldots,m$, we can think of $\mvh$ as a scalar function $\h_i$ of its $i$-th argument for $i=1,\ldots,m$. Using Theorem~5.14 of \citet{Ledoux:01concentration} and Corollary~\ref{cor:cor1}, for any $|\lambda| b \leq \rho \leq 1$, 
	\begin{align*}
	\Ent_{\mu_i} (\exp(\lambda \h_i)) \leq 2 \lambda^2 \left(\frac{1+ \rho}{1- \rho}\right)^2 \exp(2 \sqrt{5} \rho) \int |	\partial_i \mvh|^2 d \mu_i.
	\end{align*}
We now use Proposition~5.13 in \citet{Ledoux:01concentration} to tensorize the entropy by summing over $i=1$ to $m$:
	\begin{align*}
	\Ent_{\mu^m} (\exp(\lambda \h_i)) &\leq 2 \lambda^2 \left(\frac{1+ \rho}{1- \rho}\right)^2 \exp(2 \sqrt{5} \rho) \int \sum_{i=1}^m |\partial_i \mvh|^2 \exp(\lambda \mvh) d\mu^m \\
	&\leq 2 \lambda^2 \left(\frac{1+ \rho}{1- \rho}\right)^2 \exp(2 \sqrt{5} \rho) a^2 \int \exp(\lambda \h) d\mu^m.
	\end{align*}
Hence, choosing $\rho = \frac{1}{10}$, we obtain, for any $|\lambda| \leq \frac{1}{10 b}$
	\begin{align}
	\label{eq:the2_eq1}
	\Ent_{\mu^m} (\exp(\lambda \mvh)) \leq 5  a^2 \lambda^2  \E_{\mu^m}[\exp(\lambda \mvh)].
	\end{align}
Recall the moment generating function in \eqref{eq:mgf} and $\lambda-$scaled cumulant generating function in \eqref{eq:lcgf}, and note that $H(0)=\E[\mvh]$.  We now use Herbst's argument~\cite{Ledoux:01concentration}.  Using \eqref{eq:the2_eq1} we have
	\begin{align}
	H'(\lambda)=\frac{\Ent_{\mu^m} (\exp(\lambda \mvh))}{\lambda^2 \Lambda(\lambda)} \leq 5  a^2.
	\label{eq:kderiv}
	\end{align}
Integrating \eqref{eq:kderiv} we get 
	\begin{equation*}
	H(\lambda)\leq H(0)+ 5  a^2 \lambda= \E[\mvh] + 5 a^2 \lambda,
	\end{equation*}
Now, from the definition of $H(\lambda)$, this implies
	\begin{equation*}
	\log \E[\exp(\lambda \mvh)] \leq \lambda \E[\mvh] + 5  a^2 \lambda^2~. \qedhere
	\end{equation*}
\end{proof}

With this lemma we can now upper bound the error in estimating the average $\E[\mvh]$ of a function $\mvh$ of $m$ i.i.d. Gumbel random variables by generating $M$ independent samples of $\mvh$ and taking the sample mean.

\begin{corollary}
\label{cor:cor2}
Consider the same assumptions of Theorem~\ref{theorem:exp_moment}.  Let $\eta_1, \eta_2, \ldots, \eta_M$ be $M$ i.i.d. random variables with the same distribution as $\mvh$.  Then with probability at least $1-\delta$,
	\begin{equation*}
	\frac{1}{M} \sum_{j=1}^{M} \eta_j - \E[\mvh]
		\le 
		\max\left(\frac{20 b}{M} \log\frac{1}{\delta},\sqrt{\frac{20 a^2}{M} \log\frac{1}{\delta}}\right).
	\end{equation*}
	\end{corollary}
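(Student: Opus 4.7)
The plan is a standard Chernoff/Cramér-type deduction from the exponential moment bound in Theorem~\ref{theorem:exp_moment}, with the one wrinkle being that the MGF bound only holds for $|\lambda| \le 1/(10b)$, which forces a two-regime optimization and produces the $\max$ in the statement.

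First, let $S_M = \frac{1}{M} \sum_{j=1}^M \eta_j - \E[\mvh]$. By independence of the $\eta_j$, which are identically distributed to $\mvh$, the MGF factorizes, so for any $\lambda$ with $|\lambda|/M \le 1/(10b)$ (equivalently $|\lambda| \le M/(10b)$), Theorem~\ref{theorem:exp_moment} applied to each factor gives
\begin{equation*}
\E[\exp(\lambda S_M)]
= \prod_{j=1}^{M} \E\!\left[\exp\!\left(\tfrac{\lambda}{M}(\eta_j - \E[\mvh])\right)\right]
\le \exp\!\left(\tfrac{5 a^2 \lambda^2}{M}\right).
\end{equation*}
Markov's inequality applied to $\exp(\lambda S_M)$ then yields $\prob(S_M \ge r) \le \exp(-\lambda r + 5 a^2 \lambda^2 / M)$ for all $0 \le \lambda \le M/(10b)$.

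Next I would optimize the exponent over the admissible range of $\lambda$. The unconstrained minimizer is $\lambda^\star = rM/(10 a^2)$, and one must compare it against the upper limit $M/(10b)$. If $\lambda^\star \le M/(10b)$, i.e. $r \le a^2/b$, I plug in $\lambda^\star$ and obtain the sub-Gaussian tail $\prob(S_M \ge r) \le \exp(-r^2 M/(20 a^2))$. Otherwise, if $r > a^2/b$, the function is still decreasing on $[0, \lambda^\star]$, so I choose the boundary value $\lambda = M/(10b)$; the resulting exponent is $-rM/(10b) + a^2 M/(20b^2)$, and in this regime $a^2/(20 b^2) \le r/(20b)$, giving the sub-exponential tail $\prob(S_M \ge r) \le \exp(-rM/(20b))$.

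Finally, to get the high-probability form, I set each of the two tail bounds equal to $\delta$ and solve for $r$: the sub-Gaussian case gives $r = \sqrt{20 a^2 \log(1/\delta) / M}$ and the sub-exponential case gives $r = 20 b \log(1/\delta)/M$. Taking the maximum of the two thresholds covers both regimes simultaneously and yields the stated bound. The only real obstacle is keeping the case analysis clean so that the choice of $\lambda$ is always admissible; the rest is routine Chernoff bookkeeping.
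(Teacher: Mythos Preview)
Your proof is correct and follows essentially the same Chernoff-type argument as the paper: apply the MGF bound from Theorem~\ref{theorem:exp_moment} to each independent summand, optimize the exponent over the admissible range of $\lambda$, obtain the combined bound $\prob(S_M\ge r)\le\exp\!\bigl(-\tfrac{M}{20}\min(r^2/a^2,\,r/b)\bigr)$, and then invert for $r$. The only cosmetic difference is that you parameterize with $\lambda/M$ per factor and carry out the two-regime case analysis explicitly, whereas the paper writes the optimized $\min$ in one line; the content is identical.
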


\begin{proof}
From the independence assumption, using the Markov inequality, we have that
\begin{align*}
\prob\left( \sum_{j=1}^M \eta_j \leq M \E[\mvh] + M r\right) \leq \exp(-M \E[\mvh]-M r) \prod_{j=1}^M \E[\exp(\lambda \eta_j)].
\end{align*}
Applying Theorem~\ref{theorem:exp_moment}, we have, for any $|\lambda| \leq \frac{1}{10 b}$,
\begin{align*}
&\prob\left( \frac{1}{M} \sum_{j=1}^M \eta_j \leq \E[\mvh] + r\right) \le \exp(M (5 a^2 \lambda^2 -\lambda r)).
\end{align*}
Optimizing over $\lambda$ subject to $|\lambda| \leq \frac{1}{10 b}$ we obtain
\begin{align*}
\exp(M (5 a^2 \lambda^2 -\lambda r))\leq \exp\left(-\frac{M}{20}\min\left(\frac{r}{b}, \frac{r^2}{a^2}\right)\right).
\end{align*}
Equating the left side of the last inequality to $\delta$ and solving for $r$, we have the stated bound.
\end{proof}

\subsection{Application to MAP perturbations}

To apply these results to the MAP perturbation problem we must calculate the parameters in the bound given by the Corollary~\ref{cor:cor2}. Let $\mvh(\gumset)$ be the random MAP perturbation as defined  in \eqref{eq:funcform}. This is a function of $m \defeq \sum_{i=1}^{n} |\cX_i|$ i.i.d. Gumbel random variables. The (sub)gradient of this function is structured and points toward the $\gumi{i}{x_i}$ that relate to the maximizing assignment in $\Xrmap$ defined in \eqref{eq:Rmap}, when $\gum{\x} = \sum_{i=1}^{n} \gumi{i}{x_i}$, that is
\begin{align*}
\frac{\partial \mvh(\gumset)}{\partial \gumi{i}{x_i}}  = \Bigg\{ \begin{array}{ll} 1 & \mbox{if} \;\; x_i \in \Xrmap \\ 0 & \mbox{otherwise}. \end{array} 
\end{align*}
Thus the gradient satisfies $\norm{ \nabla \mvh }^2 = n$ and $\norm{ \nabla \mvh }_{\infty} = 1$ almost everywhere, so $a^2=n$ and $b=1$. Suppose we sample $M$ i.i.d. copies $\gumset_1, \gumset_2, \ldots, \gumset_M$ copies of $\gumset$ and estimate the deviation from the expectation by $\frac{1}{M} \sum_{i=1}^{M} \mvh(\gumset_i)$. We can apply Corollary \ref{cor:cor2} to both $\mvh$ and $-\mvh$ to get the following double-sided bound with probability $1 - \delta$:
\begin{align*}
\left| \frac{1}{M} \sum_{i=1}^{M} \mvh (\gumset_i) \right| \leq \max\left(\frac{20}{M}\log\frac{2}{\delta},\sqrt{\frac{20 n}{M} \log\frac{2}{\delta} }\right).
\end{align*}
Thus this result gives an estimation for the MAP perturbations $\expe{ \max_{\x} \left\{ \pot(\x) + \sum_{i=1}^{n} \gumi{i}{x_i} \right\}}$ that hold in high probability. %

This result can also be applied to estimate the quality of Algorithm \ref{alg:unbiased} that samples from the Gibbs distribution using MAP solvers. Now we let $\mvh$ equal $V_j$ from \eqref{eq:v}.  This is a function of $m_j \defeq \sum_{i=j}^{n} |\cX_i|$ i.i.d. Gumbel random variables whose gradient satisfies $\norm{ \nabla V_j }^2 = n-j+1$ and $\norm{ \nabla V_j }_{\infty} = 1$ almost everywhere, so $a^2 = n-j+1$ and $b = 1$.  Suppose $U = V_j- \expe{V_j}$ is a random variable that measures the deviation of $V_j$ from its expectation, and assume we sample $M_j$ i.i.d. random variable $U_1, U_2, \ldots, U_{M_j}$. We then estimate this deviation by the sample mean $\frac{1}{M_j} \sum_{i=1}^{M_j} U_i$.  Applying Corollary \ref{cor:cor2} to both $V_j$ and $-V_j$ to get the following bound with probability $1 - \delta$:
\begin{align}
\left| \frac{1}{M_j} \sum_{i=1}^{M_j} U_i \right| \leq \max\left(\frac{20}{M_j}  \log\frac{2}{\delta},\sqrt{\frac{20 (n-j+1)}{M_j}  \log\frac{2}{\delta}}\right).
\label{eq:errVj}
\end{align}
For each $j$ in Algorithm \ref{alg:unbiased}, we must estimate $|\cX_j|$ expectations $\E_{\gumset}[V_{j+1}]$, for a total at most $m$ expectation estimates.  For any $\epsilon > 0$ we can choose $\{M_j : j=1,\ldots,n\}$ so that the right side of \eqref{eq:errVj} is at most $\epsilon$ for each $j$ with probability $1 - n \delta$.  Let $\hat{p}_j(x_j)$ be the ratio estimated in the first step of Algorithm \ref{alg:unbiased}, and $\delta' = n \delta$.  Then with probability $1 - \delta'$, for all $j=1,2,\ldots,n$,
$\frac{\exp(\expe{V_{j+1}} - \epsilon)}{\exp(\expe{V_j} + \epsilon)} \le  \hat{p}_j(x_j) \le \frac{\exp(\expe{V_{j+1}} + \epsilon)}{\exp(\expe{V_j} - \epsilon)}$, or
	\begin{align*}
	\exp(-2 \epsilon) \le \frac{\hat{p}_j(x_j)}{p_j(x_j)} \le \exp(2 \epsilon).
	\end{align*}

\section{Experiments}
\label{sec:experiments}

\begin{figure}[t]
\centering
\includegraphics[width=6cm]{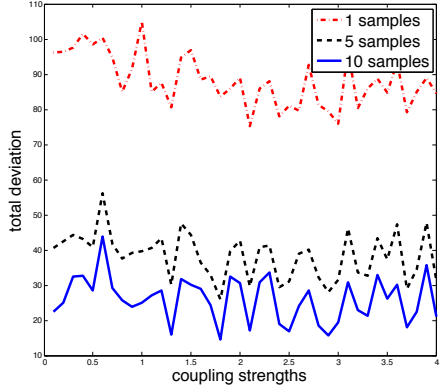}
\caption{ Error of the sample mean versus coupling strength. With only $10$ samples one can estimate the expectation well. \label{fig:samplemean}}
\end{figure}

\begin{figure*}[t]
\centering
\subfigure{
\includegraphics[width=5.5cm]{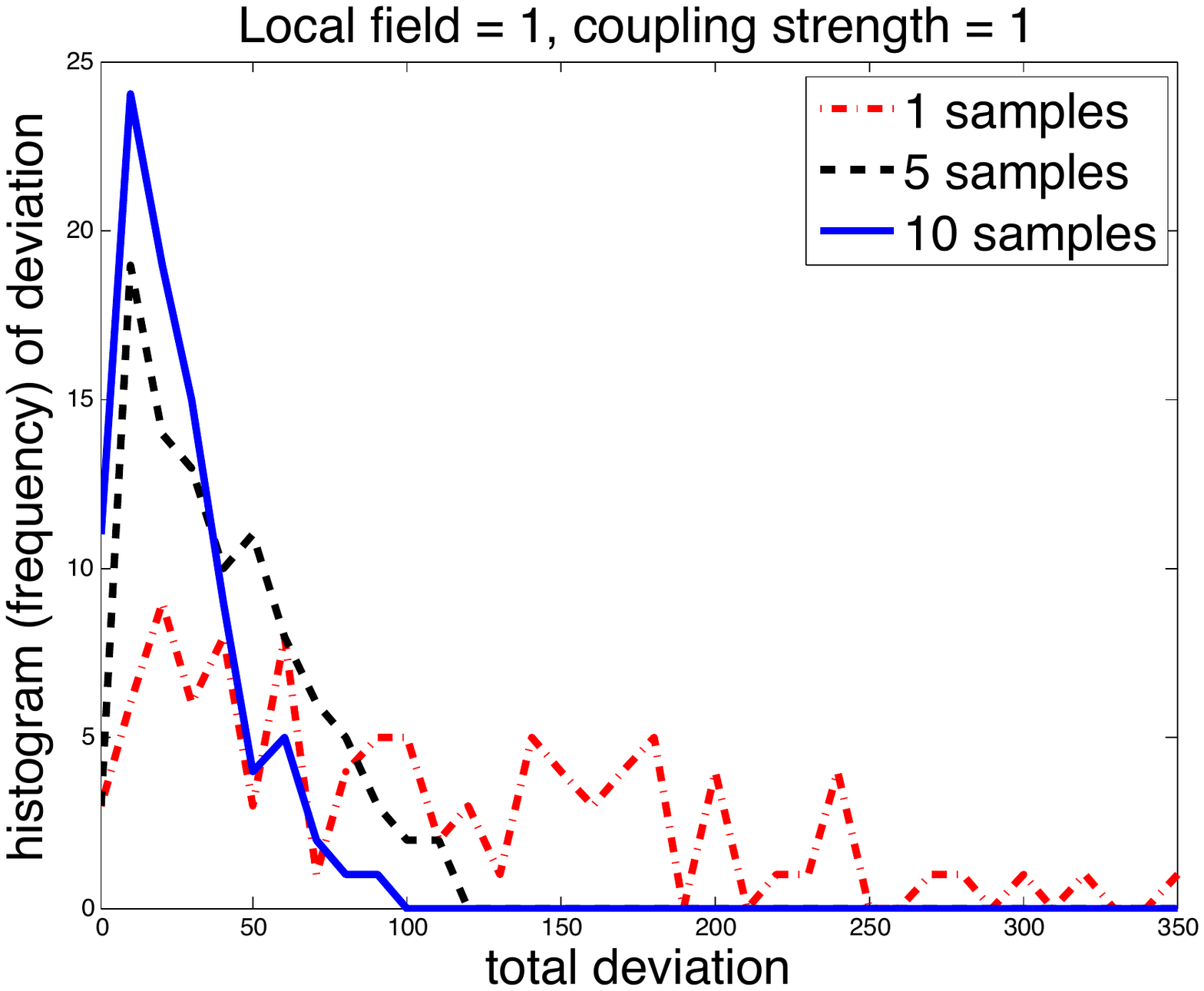}
}%
\subfigure{
\includegraphics[width=5.5cm]{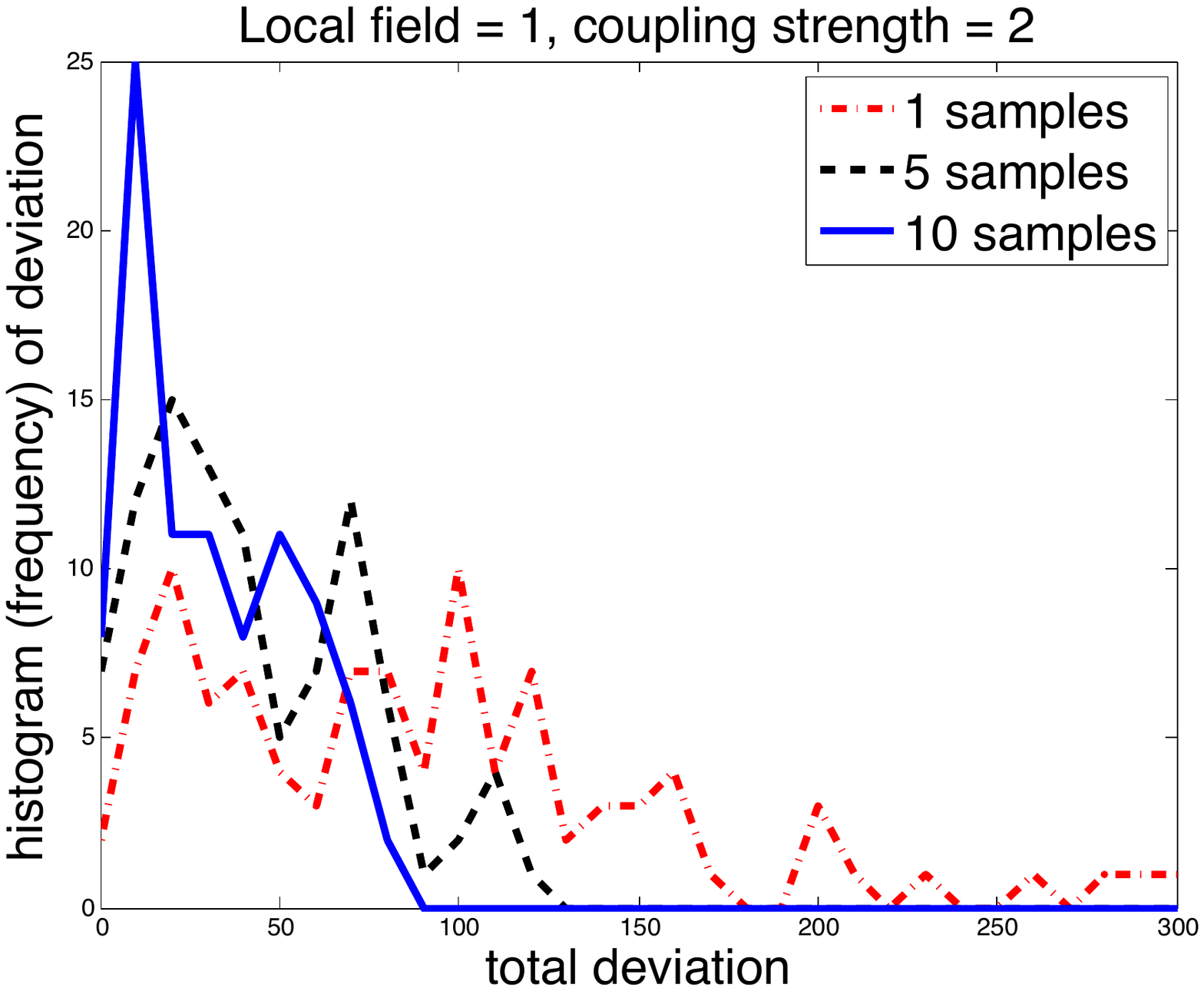}
}%
\subfigure{
\includegraphics[width=5.5cm]{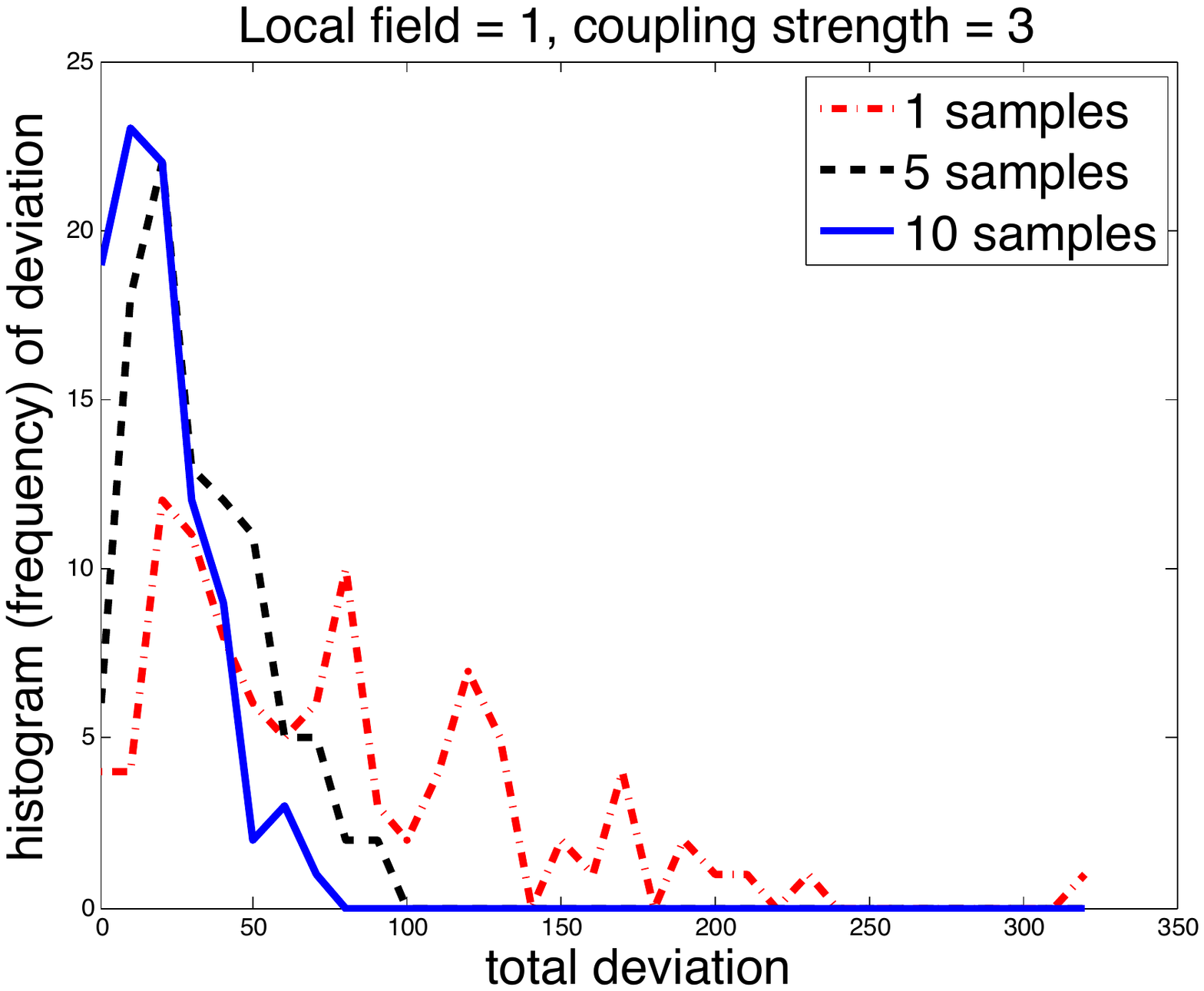}
}
\caption{ 
Histogram of MAP values for the $100 \times 100$ spin glass model. \label{fig:spin-glass}}
\end{figure*}

We evaluated our approach on a $100 \times 100$ spin glass model with $n = 10^4$ variables, for which
	\begin{align*}
	\theta(x_1,...,x_n) = \sum_{i \in V} \theta_i(x_i) + \sum_{(i,j) \in E} \theta_{i,j}(x_i,x_j)~.
	\end{align*}
where $x_i \in \{-1,1\}$. Each spin has a local field parameter $\theta_i(x_i) = \theta_i x_i$ and interacts in a grid shaped graphical structure with couplings $\theta_{i,j}(x_i,x_j) = \theta_{i,j} x_i x_j$. Whenever the coupling parameters are positive the model is called attractive since adjacent variables give higher values to positively correlated configurations.  We used low dimensional random perturbations $\gum{\x} = \sum_{i=1}^{n} \gumi{i}{x_i}$. %

The local field parameters $\theta_i$ were drawn uniformly at random from $[-1,1]$ to reflect high signal. The parameters $\theta_{i,j}$ were drawn uniformly from $[0,c]$, where $c \in [0,4]$ to reflect weak, medium and strong coupling potentials. As these spin glass models are attractive, we are able to use the graph-cuts algorithm (\citet{Kolmogorov-pami06}) to compute  the MAP perturbations efficiently. Throughout our experiments we evaluated the expected value of $\mvh(\gumset)$ with $100$ different samples of $\gumset$. We note that  we have two random variables $\gumi{i}{x_i}$ for each of the spins in the $100 \times 100$ model, thus $\gumset$ consists of $m = 2*10^4$ random variables. 

Figure \ref{fig:samplemean} shows the error in the sample mean $\frac{1}{M}\sum_{k=1}^M F(\gumset_k)$ versus the coupling strength for three different sample sizes $M=1,5,10$.  The error reduces rapidly as $M$ increases; only $10$ samples are needed to estimate the expected value of a random MAP perturbation with $10^4$ variables. To test our measure concentration result, that ensures exponential decay, we measure the deviation of the sample mean from its expectation by using $M=1,5,10$ samples. Figure \ref{fig:spin-glass} shows the histogram of the sample mean, i.e., the number of times that the sample mean has error more than $r$ from the true mean. One can see that the decay is indeed exponential for every $M$, and that for larger $M$ the decay is much faster. These show that by understanding the measure concentration properties of MAP perturbations, we can efficiently estimate the mean with high probability, even in very high dimensional spin-glass models. 

\section{Conclusion}
Sampling from the Gibbs distribution is important because it helps find near-maxima in posterior probability landscapes that are typically encountered in the high dimensional complex models.  These landscapes are often ragged due to domain-specific modeling (coupling) and the influence of data (signal), making MCMC challenging.  In contrast, sampling based on MAP perturbations ignores the ragged landscape as it directly targets the most plausible structures.  In this paper we characterized the statistics of MAP perturbations.  

To apply the low-dimensional MAP perturbation technique in practice, we must estimate the expected value of the quantities $V_j$ under the perturbations.  We derived high-probability estimates of these expectations that allow estimation with arbitrary precision.  To do so we proved more general results on measure concentration for functions of Gumbel random variables and a Poincar\'{e} inequality for non-strongly log-concave distributions.  These results hold in generality and may be of use in other applications.

The results here can be taken in a number of different directions. MAP perturbation models are related PAC-Bayesian generalization bounds, so it may be possible to derive PAC-Bayesian bounds for unbounded loss functions using our tools. Such loss functions may exclude certain configurations and are already used implicitly in computer vision applications such as interactive segmentations. More generally, Poincar\'{e} inequalities relate the variance of a function and its derivatives.  Our result may suggest new stochastic gradient methods that control variance via controlling gradients.
This connection between variance and gradients may be useful in the analysis of other learning algorithms and applications.

\bibliography{logsob-icml}
\bibliographystyle{plainnat}

\end{document}